\newcommand\blue[1]{\textcolor{blue}{#1}}
\newtheorem{theorem}{Theorem}
\newtheorem{lemma}{Lemma}
\newtheorem*{condition}{Condition}
\title{\LARGE \bf
Geometric Backstepping Control of Omnidirectional Tiltrotors Incorporating Servo–Rotor Dynamics for Robustness against\\Sudden Disturbances
}
\author{Jaewoo Lee$^{1*}$, Dongjae Lee$^{2*}$, Jinwoo Lee$^{1}$, Hyungyu Lee$^{3}$, Yeonjoon Kim$^{1}$, and H.~Jin~Kim$^{1}$
\thanks{*The first two authors contributed equally to this work.}
\thanks{\raggedright $^{1}$Department of Aerospace Engineering, Seoul National University (SNU), Seoul 08826, South Korea {\tt\small \{jaewoolee930, jinwoolee0728, 0831joon, hjinkim\}@snu.ac.kr}}%
\thanks{$^{2}$Robotics Institute, Carnegie Mellon University, Pittsburgh, PA 15217, USA {\tt\small dongjae2@andrew.cmu.edu}}%
\thanks{\raggedright $^{3}$Department of Mechanical Science and Engineering, University of Illinois Urbana-Champaign, Champaign, IL 61801, USA {\tt\small hyungyu2@illinois.edu}}%
\thanks{This work was supported by the National Research Foundation of Korea(NRF) grant funded by the Korea government(MSIT)(RS-2024-00436984). Dongjae Lee was supported by Basic Science Research Program through the National Research Foundation of Korea(NRF) funded by the Ministry of Education(RS-2025-02634317).}
}
\begin{document}

\maketitle
\thispagestyle{empty}
\pagestyle{empty}

\begin{abstract}
This work presents a geometric backstepping controller for a variable-tilt omnidirectional multirotor that explicitly accounts for both servo and rotor dynamics. Considering actuator dynamics is essential for more effective and reliable operation, particularly during aggressive flight maneuvers or recovery from sudden disturbances. While prior studies have investigated actuator-aware control for conventional and fixed-tilt multirotors, these approaches rely on linear relationships between actuator input and wrench, which cannot capture the nonlinearities induced by variable tilt angles. In this work, we exploit the cascade structure between the rigid-body dynamics of the multirotor and its nonlinear actuator dynamics to design the proposed backstepping controller and establish exponential stability of the overall system. Furthermore, we reveal parametric uncertainty in the actuator model through experiments, and we demonstrate that the proposed controller remains robust against such uncertainty. The controller was compared against a baseline that does not account for actuator dynamics across three experimental scenarios: fast translational tracking, rapid rotational tracking, and recovery from sudden disturbance. The proposed method consistently achieved better tracking performance, and notably, while the baseline diverged and crashed during the fastest translational trajectory tracking and the recovery experiment, the proposed controller maintained stability and successfully completed the tasks, thereby demonstrating its effectiveness.
\end{abstract}

\section{INTRODUCTION}

Fully actuated multirotors have attracted growing attention as a powerful means to overcome the inherent underactuation of conventional multirotor platforms \cite{rashad2020fully}. Owing to their full actuation capability, these platforms can generate horizontal forces and achieve translational motion without tilting their roll and pitch angles. Such characteristics enable a variety of applications, including the transportation of payloads without attitude changes~\cite{park2024palletrone}, physical interaction with the environment~\cite{cuniato2022power}, and aerial manipulation tasks~\cite{he2025flying}.

Building on these advances, omnidirectional multirotors have been developed to further equip with the hovering capability in arbitrary orientations~\cite{brescianini2016design,kamel2018voliro,ryll2015novel}. Their capability to hover at arbitrary orientations, referred to as omnidirectionality in attitude, has attracted considerable attention for tasks such as contact-based inspection of curved surfaces~\cite{bodie2021active} and expanding the workspace of aerial manipulation systems~\cite{lee2025autonomous}. These capabilities highlight the growing potential of omnidirectional multirotors in complex real-world operations.

\begin{figure}
    \centering
    \includegraphics[width=1.0\linewidth]{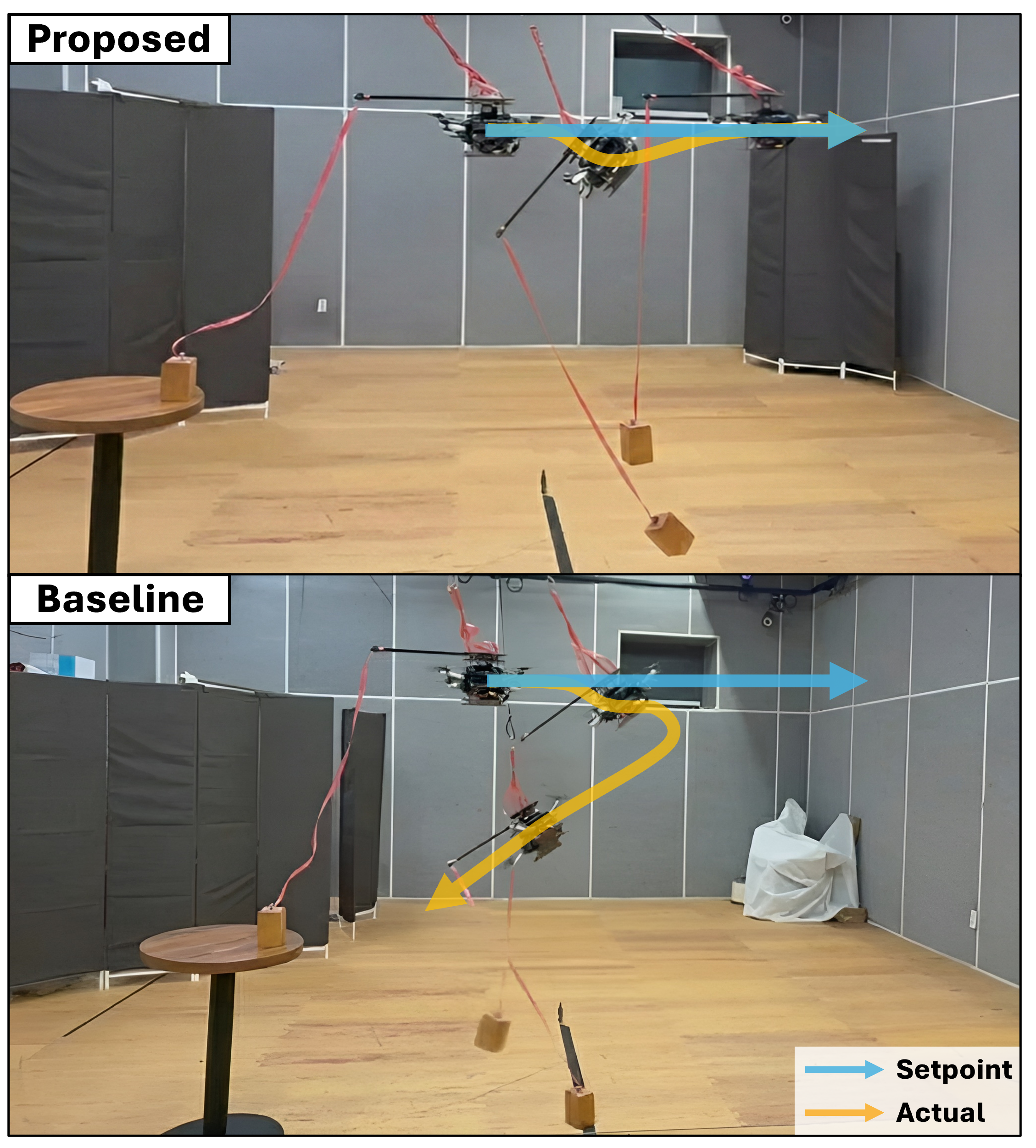}
    \caption{Time-lapse composite images from the experiments. Blue arrows indicate the commanded setpoint direction, and yellow arrows trace the actual trajectory. A sudden disturbance is induced by a falling object connected to the omnidirectional multirotor via a red cable. The proposed controller (Top) successfully stabilizes the vehicle and follows the setpoint, whereas the baseline (Bottom) diverges and crashes.}
    
    \label{fig:thumbnail}
    \vspace{-15pt}
\end{figure}

To fully exploit the advantages of omnidirectional flight and to ensure reliable operation, it is essential to incorporate \textit{actuator dynamics}, the dynamics of rotors and servos, into the control design. Many prior studies, however, have relied on control allocation methods that compute the desired actuator input from the desired wrench and then directly treat this value as the actual actuator input~\cite{kamel2018voliro,lee2024autonomous,bodie2021active,lee2025autonomous}. Such an approach implicitly assumes that actuators can instantaneously realize any commanded input. This assumption may hold in slow or moderate maneuvers, but it becomes a critical issue when the control input must change rapidly, as in agile trajectory tracking or in recovering from sudden external disturbances. In these scenarios, ignoring actuator dynamics can degrade control performance or even lead to flight instability and crashes as shown in Fig. \ref{fig:thumbnail} baseline.

To address these challenges, this work proposes a control strategy for omnidirectional variable-tilt multirotors that explicitly accounts for both rotor and servo dynamics, in contrast to previous studies that ignored actuator dynamics or considered only a single component such as the servo or rotor~\cite{li2024servo,ryll2015novel,allenspach2020design,park2023design,seshasayanan2025robust}. We propose a geometric backstepping controller motivated by the cascaded structure between the rigid-body dynamics and the actuator dynamics. 

We first design a wrench controller for the rigid-body dynamics without considering the actuator dynamics and derive an actuator controller in a backstepping manner \cite{bang2009robust} that takes into account the gap between the desired wrench command and the actual wrench. Moreover, unlike prior studies that considered actuator dynamics only at the control allocation level and either locally linearized or entirely omitted their influence on the overall system dynamics \cite{allenspach2020design,cuniato2024allocation}, we conduct a Lyapunov-based stability analysis of the full closed-loop system that explicitly incorporates actuator dynamics. Through the analysis, we show that the entire system is exponentially stable. Finally, we further show that the proposed controller is also robust to parametric uncertainties in the time constant terms inherent to actuator dynamics.

The main contributions of this paper are summarized as follows:
\begin{itemize}
    \item A geometric backstepping control framework for variable-tilt omnidirectional multirotors that explicitly incorporates the first-order dynamics of both servos and motors.
    \item A rigorous stability analysis of the full closed-loop system, establishing exponential stability for known actuator time constants and ultimate boundedness under uncertainty in these parameters.
    \item Validation of the proposed controller through real-world experiments, including high-speed trajectory tracking and recovery from impulsive rotational disturbances. The results demonstrate that our method significantly outperforms a baseline controller, which fails to maintain stability during aggressive maneuvers.
\end{itemize}

\section{RELATED WORK}

Many existing control strategies for conventional multirotors \cite{bouabdallah2005backstepping,lee2010geometric} and omnidirectional multirotors \cite{kamel2018voliro, lee2025autonomous, lee2024autonomous, bodie2021active} simplify the problem by assuming that actuators can respond instantaneously to commanded inputs. While effective for slow movements, this assumption fails during fast trajectory tracking or when recovering from external disturbances. 


To address this issue, recent works begin to incorporate actuator dynamics directly into the controller design. For conventional multirotors, incorporating first-order rotor dynamics improves high speed tracking and robustness \cite{faessler2016thrust,tal2020accurate}. For fixed-tilt platforms where servo dynamics are absent, research has focused on the impact of rotor dynamics. Some studies have modeled the rotor angular speed dynamics as a first-order system and implemented an angular velocity feedback controller \cite{Brescianini2018}. Similarly, a geometric tracking controller was proposed that incorporated a first-order wrench dynamics model \cite{lee2025geometric}. However, for variable-tilt platforms, the wrench dynamics cannot be represented by a single first-order system due to the nonlinearity of the control allocation map. Consequently, variable-tilt multirotors require control strategies that explicitly accommodate these additional nonlinearities.



Another challenge for variable-tilt platforms is the slower response of the tilt servos. To address this problem, a Smith predictor was employed to compensate for known servo delays \cite{ryll2015novel}. A quasi-decoupling controller was also developed, which uses the current servo angle for control allocation to achieve performance independent of the servo's response time \cite{Lee2021_carosq}. A similar approach to using the current servo angle in control allocation was also devised in \cite{park2023design} from the observation of slower servomotor response than the rotor response. A common limitation of these approaches is their focus on either rotor or servo dynamics in isolation. 

More recently, efforts have been made to address both actuator dynamics simultaneously. For instance, an approach using Nonlinear Model Predictive Control (NMPC) modeled both rotor thrust dynamics and servo angle dynamics as first-order systems \cite{li2024servo}. However, the substantial computational burden of NMPC remains particularly challenging in scenarios that require high-frequency control and rapid, large changes in actuator inputs. A different strategy, known as differential allocation, addresses actuator velocity limits and dynamics directly within the control allocation module \cite{allenspach2020design}. This approach was later extended to incorporate the power dynamics of the actuators, providing a more comprehensive model at the allocation level \cite{cuniato2024allocation}. However, while these methods are effective, their focus remains on solving the allocation problem itself. Consequently, they do not provide a formal stability analysis for the entire closed-loop system, where the vehicle's rigid-body dynamics are fully coupled with the actuator dynamics, and stability claims in these works are confined to simplified or linearized models. Furthermore, they rely on the assumption of a perfectly known model and do not provide a robustness analysis against model uncertainties.

\section{CONTROLLER DESIGN}

\subsection{Notations}

In this section, we define the key notations used throughout this work. For vectors $a,b \in \mathbb{R}^3$, the hat map is defined as $\hat{a} b = a \times b$, which maps a vector to a skew-symmetric matrix. The symbol $(\cdot)^{\vee}$ denotes its inverse transformation. $\mathrm{tr}(\cdot)$ denotes the trace of a matrix, 
and $\mathrm{sat}_{\sigma}(y)$ is defined as $\sigma \cdot \text{sign}(y)$ if $|y| > \sigma$, and $y$ otherwise.

$\| \cdot \|$ represents the 2-norm when applied to a vector and the induced 2-norm when applied to a matrix. $\lVert a \rVert_\infty$ denotes the infinity norm of a vector $a$. The notation $\|\cdot\|_F$ represents the Frobenius norm of a matrix, which is defined as:
\begin{equation}
    \|A\|_F = \sqrt{\sum_{i,j} |A_{ij}|^2} \,\,.
\end{equation}
For a square matrix \( A \), the maximum and minimum eigenvalues are denoted by \( \lambda_{\max}(A) \) and \( \lambda_{\min}(A) \), respectively. The notation \( B^{\dagger} \) denotes the pseudo-inverse of a non-square matrix \( B \). 
The notation $\text{blockdiag}(A_1, A_2, \dots, A_k)$ denotes a block-diagonal matrix with blocks $A_1, A_2, \dots, A_k$ on the diagonal. $[a; b]$ represents the concatenation of two column vectors $a$ and $b$, which is defined as $[a; b] := [a^\top , b^\top]^\top$. The $n \times n$ identity matrix is denoted as $I_n$.

\subsection{System Dynamics}
We consider the variable-tilt multirotor as in \cite{kamel2018voliro,ryll2015novel,lee2024autonomous}. The total thrust $f$ and torque $\tau$ are expressed as a combination of rotor thrusts and servo angles: 
\begin{equation}
\begin{aligned} \label{eq: force and torque}
f &= \sum_{i=1}^{n} \prescript{B}{}{R}_{i} f_i \\
\tau &= \sum_{i=1}^{n} [l_i \times \prescript{B}{}{R}_{i} f_i ~~\pm~~ k_f\prescript{B}{}{R}_{i} f_i] 
\end{aligned}    
\end{equation}
where $n > 3$ is the number of rotors, $\prescript{B}{}{R}_{i} \in SO(3)$ is the corresponding rotation matrix of the $i^{th}$ servo angle expressed in the multirotor body frame, $f_i \in \mathbb{R}$ is the $i^{th}$ rotor thrust, and $l_i \in \mathbb{R}^3$ and $k_f \in \mathbb{R}$ are the displacement vectors of the $i^{th}$ rotor from the geometric center of the multirotor and the aerodynamic drag coefficient of the rotors. Using $f$ and $\tau$ defined in (\ref{eq: force and torque}), the equations of motion of the omnidirectional tiltrotor, including the dynamics of the rotors and servo motors, are given as follows:
\begin{subequations}\label{eq:system dynamics}
\begin{align}
&\begin{aligned}
\ddot{p}   & = \frac{1}{m} Rf - g e_3 + \Delta_p \\
\dot{R}    & = R \hat{\omega} \\
\dot{\omega} & = J^{-1}\!\left(-\omega \times J\omega + \tau\right) + \Delta_R
\end{aligned}
\label{eq: system dynamics - rigid body} \\
&\begin{aligned}
\dot{f_i}  & = \frac{1}{\alpha_f}\bigl(f_{c_i} - f_i\bigr) \\
\dot{\theta_i} & = \frac{1}{\alpha_\theta}\bigl(\theta_{c_i} - \theta_i\bigr)
\end{aligned}
\label{eq: system dynamics - actuator}
\end{align}
\end{subequations}
where $p,\omega \in \mathbb{R}^3$ and $R \in SO(3)$ are the position, body angular velocity, and the orientation of the multirotor. $i^{th}$ rotor thrust and servomotor angle are $f_i$ and $\theta_i$, respectively. $m,g\in \mathbb{R}$ are mass of the multirotor and the gravitational acceleration constant, $J \in \mathbb{R}^{3 \times 3}$ is the mass moment of inertia of the multirotor, and $\alpha_f, \alpha_\theta \in \mathbb{R}$ are the actuator time constants. 
We model the thrust directly as a first-order system to bypass the nonlinearity of the rotor speed-to-thrust mapping, a choice that balances modeling accuracy with controller simplicity, as discussed in \cite{lee2025geometric}.
$\Delta_p, \Delta_R \in \mathbb{R}^3$ are constant translational and rotational disturbance. Lastly, $f_{ci}, \theta_{ci} \in \mathbb{R}$ are the rotor thrust and servo angle commands.

For the ease of controller design, we define the vector $u \in \mathbb{R}^{2n}$ as
\begin{align*}
u &= [f_1 \cos\theta_1 , f_1\sin \theta_1, f_2 \cos \theta_2, f_2 \sin \theta_2 ,\ldots,f_n \sin \theta_n]^\top. 
\end{align*}
Accordingly, the resulting total wrench is computed as
\begin{align} \label{eq: mu}
\mu := \begin{bmatrix} f \\ \tau \end{bmatrix} = Bu
\end{align}
where $B \in \mathbb{R}^{6 \times 2n}$ denotes the allocation matrix \cite{kamel2018voliro}. Then, the rigid-body dynamics (\ref{eq: system dynamics - rigid body}) can be reformulated as
\begin{equation} \label{eq: system dynamics - rigid body simplified}
    \begin{aligned}
        \begin{bmatrix}
            \ddot{p} \\ \dot{\omega}
        \end{bmatrix} = \begin{bmatrix}
            -g e_3 \\ J^{-1} (- \omega \times J \omega)
        \end{bmatrix} + 
        \begin{bmatrix}
            \frac{1}{m}R & 0 \\
            0 & J^{-1}
        \end{bmatrix} 
        \mu + \begin{bmatrix}
            \Delta_p \\ \Delta_R
        \end{bmatrix}.
    \end{aligned}
\end{equation}
The actual control input, denoted by \(u_c\), is defined as follows:
\begin{align*}
u_c &= [f_{c_1}, \theta_{c_1},f_{c_2}, \theta_{c_2} \ldots, f_{c_n},\theta_{c_n} ]^\top 
\end{align*}
To characterize $\dot{u}$, we examine the time derivatives of its first two components:
\begin{align*}
\begin{bmatrix}
    \dot{u_1} \\ \dot{u_2}
\end{bmatrix} 
&= 
\begin{bmatrix}
    \dot{f_1}\cos\theta_1 - f_1\sin\theta_1\dot{\theta_1} \\ \dot{f_1}\sin\theta_1 - f_1\cos\theta_1\dot{\theta_1}
\end{bmatrix} \\
&=
\begin{bmatrix}
    -\frac{u_1}{\alpha_f}+\frac{u_2}{\alpha_\theta}\tan^{-1}\frac{u_2}{u_1} \\ 
     -\frac{u_2}{\alpha_f}-\frac{u_1}{\alpha_\theta}\tan^{-1}\frac{u_2}{u_1}
\end{bmatrix}
+
\begin{bmatrix}
    \frac{1}{\alpha_f}\frac{u_1}{\sqrt{u_1^2+u_2^2}} & -\frac{u_2}{\alpha_\theta} \\ 
    \frac{1}{\alpha_f}\frac{u_2}{\sqrt{u_1^2+u_2^2}} & \frac{u_1}{\alpha_\theta}
\end{bmatrix}
\begin{bmatrix}
    f_{c_1} \\ \theta_{c_1}
\end{bmatrix} \\
&=: \zeta_1(u) + \eta_1(u)
\begin{bmatrix}
    f_{c_1} \\ \theta_{c_1}
\end{bmatrix}
\end{align*}
Generalizing the above result to all columns, $\dot{u}$ can be written as follows:
\begin{align} \label{eq: system dynamics - actuator simplified}
\dot{u} = \zeta(u)+\eta(u)u_c
\end{align}
where $\zeta(u) \coloneqq [\zeta_1 ; \zeta_2;\dotsc;\zeta_n]\in\mathbb{R}^{2n}$, $\eta(u) \coloneqq \text{blockdiag}(\eta_1,\eta_2,\dotsc,\eta_n)\in\mathbb{R}^{2n\times 2n}$.
To avoid ill-posedness of (\ref{eq: system dynamics - actuator simplified}), we assume that $f_i \neq 0$ $\forall i$.

\subsection{Backstepping Controller Design}

Observing the structures of (\ref{eq: system dynamics - rigid body}) and (\ref{eq: system dynamics - actuator}), once $f_{ci}$ and $\theta_{ci}$ are determined, $f$ and $\theta$ are computed from the actuator dynamics, which then define the rigid-body dynamics. This forms a cascaded structure. Inspired by \cite{bang2009robust}, we exploit this cascaded structural property to design a backstepping controller and ensure the stability of the overall system. The proposed backstepping controller is organized as follows: first, a nominal controller for the desired control input $\mu_d$ is constructed by considering only the rigid-body dynamics while neglecting the actuator dynamics. Next, using the difference between the desired control input $\mu_d$ and the actual control input $\mu$, defined as $e_\mu = \mu - \mu_d$, a candidate Lyapunov function is formulated. By analyzing this function, we derive the final actuator-level control input $u_c$ that guarantees system stability.

Let us first revisit $B u = \mu = \mu_d + e_\mu$ where $e_\mu = \mu - \mu_d$, and let $\mu_d = [\mu_{d,1};\mu_{d,2}]$, $e_\mu = [e_{\mu,1};e_{\mu,2}]$ where $\mu_1, \mu_2, e_{\mu,1}, e_{\mu,2} \in \mathbb{R}^3$. 
Errors in position $e_p$, linear velocity $e_v$, rotation $e_R$, and angular velocity $e_\omega$ are defined as follows:
\begin{equation}
\begin{aligned}
e_p &= p - p_d, & e_v &= v - v_d, \\
e_R &= \frac{1}{2} (R_d^\top R-R^\top R_d)^\vee, & e_\omega &= \omega-R^\top  R_d \omega_d 
\end{aligned}
\end{equation}
Then, the error dynamics for the rigid-body dynamics (\ref{eq: system dynamics - rigid body simplified}) can be computed as follows:
\begin{equation} \label{eq: error dynamics - rigid body dynamics}
    \begin{bmatrix}
        \dot{e}_v \\ \dot{e}_\omega
    \end{bmatrix} = F + G (\mu_d + e_\mu) + \Delta
\end{equation}
where 
\begin{equation*}
\begin{gathered}
    F = \begin{bmatrix}
            -g e_3 - \dot{v}_d \\ J^{-1} (- \omega \times J \omega) + \hat{\omega}R^\top R_d \omega_d - R^\top R_d\dot{\omega}_d
        \end{bmatrix}, \\
    G = \begin{bmatrix}
            \frac{1}{m}R & 0 \\ 0 & J^{-1}
        \end{bmatrix}, \quad \Delta = \begin{bmatrix}
            \Delta_p \\ \Delta_R
        \end{bmatrix}.
\end{gathered}
\end{equation*}
Define $\mu_d$ using \cite{goodarzi2013geometric_arxiv} as the following, which guarantees exponential stability if $e_\mu = 0$:
\begin{equation} \label{eq: nominal control input}
\begin{aligned}
\mu_{d,1} &= mR^\top (-k_{tp}e_p-k_{td}e_v - k_{ti} \text{sat}_{\sigma_1}(e_{pi})+ ge_3 + \dot{v}_d) \\
\mu_{d,2} &= \omega \times J\omega - J(\hat{\omega}R^\top R_d \omega_d - R^\top R_d\dot{\omega}_d) - k_{rp}e_R \\ 
&~~~- k_{rd}e_\omega - k_{ri} \text{sat}_{\sigma_2}(e_{ri})
\end{aligned}
\end{equation}
where, $k_{tp}, k_{td}, k_{rp}, k_{rd},  k_{ti},  k_{ri}$ are positive constants. For positive constants $c_1$ and $c_2$, $e_{pi}$ and $e_{ri}$ are defined as
\begin{equation} \label{eq: integral states}
    e_{pi} = \int_0^t e_v(\tau) + c_1 e_p(\tau) d\tau, \quad e_{ri} = \int_0^t e_\omega(\tau) + c_2 e_R(\tau) d\tau.
\end{equation}

Next, consider the candidate Lyapunov functions for the translational and rotational dynamics as
\begin{align}
V_1 &:= \tfrac{1}{2}k_{tp}\|e_p\|^2 \;+\; \tfrac{1}{2}\|e_v\|^2 \;+\; c_1\,e_p^\top e_v  \notag \\ &~~~ + \int_{\frac{\Delta_p}{k_{pi}}}^{e_{pi}} (k_{pi} \, \text{sat}_{\sigma_1}(\gamma) - \Delta_p) \, \cdot  d\gamma , \label{eq:V1_def}\\
V_2 &:= \tfrac{1}{2}e_\omega^\top J e_\omega \;+\; k_{rp}\,\Psi(R,R_d) \;+\; c_2\,e_R^\top e_\omega \notag \\ &~~~ + \int_{\frac{\Delta_R}{k_{ri}}}^{e_{ri}} (k_{ri} \,\text{sat}_{\sigma_2}(\gamma) - \Delta_R) \, \cdot d\gamma , \label{eq:V2_def}
\end{align}
where $\Psi(R,R_d)=\tfrac{1}{2}\operatorname{tr}(I-R_d^\top R)$. Positive-definiteness of the two candidate functions can be easily satisfied by defining $k_{pi} \sigma_1 > \lVert \Delta_p \rVert_\infty$ and $k_{ri} \sigma_2 > \lVert \Delta_R \rVert_\infty$ \cite{goodarzi2013geometric_arxiv}. Unlike \cite{goodarzi2013geometric_arxiv} where no actuator dynamics is considered, there exists the difference $e_\mu$ in the system dynamics (\ref{eq: error dynamics - rigid body dynamics}) which interrupts exponential stability guarantee. To simultaneously account for the influence of $e_\mu$ on the system dynamics, we propose an augmented candidate Lyapunov function $V$ as 
\begin{equation}
    V = \frac{1}{2} e_\mu^\top e_\mu + V_1 + V_2
\end{equation}
whose derivative is then used to obtain the final actuator control input $u_c$. We define $u_c$ as follows:
\begin{equation} \label{eq: uc}
u_c = \eta^{-1}B^\dagger(\dot{\mu}_d-B\zeta-k_\mu e_\mu-\kappa)
\end{equation}
where $k_\mu > 0$ is a control parameter and 
\begin{align} \label{eq: kappa}
    \kappa = \begin{bmatrix}
        \frac{c_1}{m}R^\top e_p + \frac{1}{m}R^\top e_v \\ c_2J^{-1}e_R+e_\omega
    \end{bmatrix}.
\end{align}
$\zeta, \eta$ can be found from (\ref{eq: system dynamics - actuator simplified}) and $B$ appears in (\ref{eq: mu}). 

\section{STABILITY \& ROBUSTNESS ANALYSIS}

In the preceding section, a controller was designed using the backstepping approach. This section establishes exponential stability when the actuator time constant is known, and then demonstrates ultimate boundedness when the constant is unknown but its bound is available. Before these analyses, the conditions on the control gains and a lemma commonly used in both cases are presented.

\begin{condition}
For the constant parameters $c_1, c_2$ appearing in (\ref{eq: integral states}), (\ref{eq:V1_def}), (\ref{eq:V2_def}) and (\ref{eq: kappa}) and $k_{pi}, k_{ri}, \sigma_1, \sigma_2$ appearing in (\ref{eq:V1_def}) and (\ref{eq:V2_def}), we impose the following conditions:
\begin{equation} \label{eq: conditions}
\begin{aligned}
    c_1 &< min(\sqrt{k_{tp}}~,~\frac{4k_{tp}k_{td}}{k_{td}^2 + 4k_{tp}}) \\
    c_2 &< min(\sqrt{k_{rp}\lambda_{min}(J)}~,~\frac{4\lambda_{min}^2(J)  k_{rp}k_{rd}}{4k_{rp}\lambda_{min}^2(J) + k_{rd}^2\lambda_{max}(J)})\\
    k_{pi} \sigma_1 &> \lVert \Delta_p \rVert_\infty, \quad k_{ri} \sigma_2 > \lVert \Delta_R \rVert_\infty.
\end{aligned}
\end{equation}
\end{condition}

\begin{lemma} \label{lem:Vbounds}
Assume that (\ref{eq: conditions}) holds. Then, for $z_1 = [\|e_p\|;\|e_v\|]$ and $z_2 = [\|e_R\|;\|e_\omega\|]$, the candidate Lyapunov function $V$ is bounded by the following:
\begin{equation} \label{eq: V bound}
\begin{gathered}
    \tfrac{1}{2} e_\mu ^\top e_\mu + z_1^\top M_{11} z_1 + z_2^\top M_{21} z_2 + V_I
    \;\le\; V
    \;\le\; \\
    \tfrac{1}{2} e_\mu ^\top e_\mu + z_1^\top M_{12} z_1 + z_2^\top M_{22} z_2 + V_I,
\end{gathered}
\end{equation}
where $M_{11}, M_{12}, M_{21}, M_{22}$ are positive definite matrices and $V_I$ is positive definite.
\end{lemma}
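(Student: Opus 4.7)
The plan is to decompose $V$ into three additive blocks---the pure $e_\mu$ quadratic, the non-integral parts of $V_1$ and $V_2$, and the two integral terms (which together form $V_I$)---and bound each block separately. The $\tfrac{1}{2}e_\mu^\top e_\mu$ term is already in the stated form and passes through both bounds untouched, so the work reduces to bounding the translational and rotational quadratics and verifying that $V_I$ is positive definite.

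For the translational quadratic $\tfrac{1}{2}k_{tp}\|e_p\|^2 + \tfrac{1}{2}\|e_v\|^2 + c_1 e_p^\top e_v$, I would apply the Cauchy--Schwarz inequality to the cross term, giving $-\|e_p\|\|e_v\| \le e_p^\top e_v \le \|e_p\|\|e_v\|$, and collect the result as a quadratic form in $z_1$. This yields symmetric matrices $M_{11}, M_{12}$ with diagonal entries $\tfrac{1}{2}k_{tp}$ and $\tfrac{1}{2}$ and off-diagonal entries $\mp\tfrac{1}{2}c_1$, whose positive definiteness reduces to the determinant condition $c_1^2 < k_{tp}$---exactly the first half of (\ref{eq: conditions}). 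For the rotational block, I would invoke the standard bound $\tfrac{1}{2}\|e_R\|^2 \le \Psi(R,R_d) \le \tfrac{1}{2-\psi_2}\|e_R\|^2$ from \cite{lee2010geometric} on the sublevel set $\Psi \le \psi_2 < 2$, combine it with $\lambda_{\min}(J)\|e_\omega\|^2 \le e_\omega^\top J e_\omega \le \lambda_{\max}(J)\|e_\omega\|^2$ and Cauchy--Schwarz on $c_2 e_R^\top e_\omega$, and assemble the corresponding $M_{21}, M_{22}$. Their positive definiteness reduces to $c_2^2 < k_{rp}\lambda_{\min}(J)$, matching the second line of (\ref{eq: conditions}).

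The main obstacle is $V_I$, because the saturation and the constant disturbance shift make the integrand piecewise linear rather than purely quadratic. Interpreting the vector integral component-wise, each component integrand $g_j(\gamma) = k_{pi}\,\text{sat}_{\sigma_1}(\gamma) - \Delta_{p,j}$ is monotone non-decreasing, vanishes at $\gamma = \Delta_{p,j}/k_{pi}$ provided this point lies inside the saturation band $[-\sigma_1,\sigma_1]$, is strictly negative below the root, and strictly positive above it. The third line of (\ref{eq: conditions}), $k_{pi}\sigma_1 > \|\Delta_p\|_\infty$, guarantees exactly this containment for every component, and the analogous argument applies to the rotational integral under $k_{ri}\sigma_2 > \|\Delta_R\|_\infty$. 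Hence each definite integral, taken from the root to $e_{pi,j}$ (resp.\ $e_{ri,j}$), is nonnegative and vanishes only at the root; summing over components makes $V_I$ positive definite in the shifted integral state. Adding the three blocks back together produces the two-sided bound in (\ref{eq: V bound}) and completes the proof.
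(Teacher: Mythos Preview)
Your proposal is correct and follows essentially the same approach as the paper: split $V$ into the $e_\mu$ quadratic, the non-integral quadratic parts of $V_1,V_2$, and the integral term $V_I$, bound the quadratic blocks via Cauchy--Schwarz together with the standard $\Psi$ and eigenvalue estimates, and check the sign of $V_I$ from the monotone saturated integrand. The paper simply delegates the quadratic bounds and positive-definiteness of the $M_{ij}$ to \cite{goodarzi2013geometric_arxiv} and asserts $V_{I_p},V_{I_R}>0$ in one line, so your write-up is just a more explicit version of the same argument.
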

\begin{proof}
From \cite{goodarzi2013geometric_arxiv}, $V_1$ and $V_2$ are bounded by 
\begin{gather*}
z_1^\top  M_{11} z_1 + V_{I_p}   \le V_1 \le z_1^\top  M_{12} z_1 + V_{I_p} \\
z_2^\top  M_{21} z_2 + V_{I_R} \le V_2 \le z_2^\top  M_{22} z_2 + V_{I_R}
\end{gather*}
where the matrices $M_{11}$, $M_{12}, M_{21}, M_{22}$ are given by
\begin{gather*}
M_{11} = \frac{1}{2} \begin{bmatrix} k_{tp} & -c_1 \\ -c_1 & 1 \end{bmatrix},~~
M_{12} = \frac{1}{2} \begin{bmatrix} k_{tp} & c_1 \\ c_1 & 1 \end{bmatrix} \\
M_{21} = \frac{1}{2} \begin{bmatrix} k_{rp} & -c_2 \\ -c_2 & \lambda_{min}(J) \end{bmatrix},~~
M_{22} = \frac{1}{2} \begin{bmatrix} \frac{2k_{rp}}{2-\psi_2} & c_2 \\ c_2 & \lambda_{max}(J) \end{bmatrix},
\end{gather*}
and $V_{I_p}$ and $V_{I_R}$ are defined as 
\begin{equation*}
\begin{aligned}
    V_{I_p} &= \int_{\frac{\Delta_p}{k_{pi}}}^{e_{pi}} (k_{pi} \, \text{sat}_{\sigma_1}(\gamma) - \Delta_p) \, \cdot  d\gamma \\
    V_{I_R} &= \int_{\frac{\Delta_R}{k_{ri}}}^{e_{ri}} (k_{ri} \,\text{sat}_{\sigma_2}(\gamma) - \Delta_R) \, \cdot d\gamma.
\end{aligned}
\end{equation*}
From (\ref{eq: conditions}), $M_{11}, M_{12}, M_{21}, M_{22}$ are all positive-definite \cite{goodarzi2013geometric_arxiv} and $V_{I_p} > 0$ and $V_{I_R} > 0$ unless $k_{pi} e_{pi} = \Delta_p$ and $k_{ri} e_{ri} = \Delta_R$. Defining  $V_I \coloneqq V_{I_p} + V_{I_R}$, this completes the proof.
\end{proof}

\subsection{Exponential stability analysis with known actuator dynamics}

\begin{theorem}
    Assume that conditions (\ref{eq: conditions}) hold and that parameters $\alpha_f$, $\alpha_\theta$ in (\ref{eq: system dynamics - actuator}) are known. Then, the closed-loop system composed of (\ref{eq: system dynamics - rigid body simplified}), (\ref{eq: system dynamics - actuator simplified}) and (\ref{eq: uc}) is exponentially stable.
\end{theorem}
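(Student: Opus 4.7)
The plan is to use Lyapunov's direct method with the augmented candidate $V = \tfrac{1}{2}e_\mu^\top e_\mu + V_1 + V_2$ already introduced in the controller design section, leveraging Lemma~\ref{lem:Vbounds} for the quadratic bounds. The key structural idea of backstepping is that $\mu_d$ in (\ref{eq: nominal control input}) is designed so that the rigid-body part of $\dot V$ would be negative definite if $e_\mu\equiv 0$; the role of $u_c$ is then to drive $e_\mu$ to zero while simultaneously cancelling the cross terms produced by $e_\mu$ in the rigid-body Lyapunov derivative.

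First I would compute $\dot V_1$ and $\dot V_2$ along the error dynamics (\ref{eq: error dynamics - rigid body dynamics}) with $\mu = \mu_d + e_\mu$, following the geometric-tracking manipulations of \cite{goodarzi2013geometric_arxiv}. The nominal choice (\ref{eq: nominal control input}) together with the saturated integral terms in $V_{I_p}, V_{I_R}$ absorbs the constant disturbance $\Delta$ under the conditions $k_{pi}\sigma_1 > \lVert \Delta_p\rVert_\infty$ and $k_{ri}\sigma_2 > \lVert \Delta_R\rVert_\infty$, yielding
\begin{equation*}
\dot V_1 + \dot V_2 \;\le\; -z_1^\top W_1 z_1 \;-\; z_2^\top W_2 z_2 \;+\; e_\mu^\top \kappa,
\end{equation*}
where $\kappa$ is exactly as in (\ref{eq: kappa}) and $W_1, W_2$ are positive definite under the first two inequalities of (\ref{eq: conditions}). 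Next I would compute $\dot e_\mu = B\dot u - \dot\mu_d = B(\zeta + \eta u_c) - \dot\mu_d$. Substituting $u_c$ from (\ref{eq: uc}) and using $BB^\dagger = I_6$ (valid since the omnidirectional allocation matrix has full row rank) collapses this to $\dot e_\mu = -k_\mu e_\mu - \kappa$, so that $e_\mu^\top \dot e_\mu = -k_\mu \lVert e_\mu\rVert^2 - e_\mu^\top \kappa$.

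Summing the three contributions, the cross terms $e_\mu^\top \kappa$ cancel exactly, leaving
\begin{equation*}
\dot V \;\le\; -k_\mu \lVert e_\mu \rVert^2 \;-\; z_1^\top W_1 z_1 \;-\; z_2^\top W_2 z_2.
\end{equation*}
Combining this with the quadratic sandwich from Lemma~\ref{lem:Vbounds}, one obtains $\dot V \le -\lambda V + \text{(non-positive terms from }V_I\text{)}$ for some $\lambda>0$ depending on $k_\mu$, $\lambda_{\min}(W_i)$, and $\lambda_{\max}(M_{i2})$, from which exponential convergence of $(e_p, e_v, e_R, e_\omega, e_\mu)$ to zero follows by the standard comparison lemma, provided the initial attitude error stays inside the usual sublevel set where $\Psi(R,R_d)$ admits the two-sided quadratic bound in $\|e_R\|$.

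The main obstacle I anticipate is handling the saturated integral terms $V_I$ cleanly alongside the backstepping step. In the disturbance-free setting the cancellation is straightforward, but with a constant $\Delta$ the $\mathrm{sat}_{\sigma}(\cdot)$ nonlinearities in (\ref{eq:V1_def}), (\ref{eq:V2_def}) require careful accounting to ensure the time derivative of the integral terms cancels $\Delta$ without leaving a residual that would destroy the negative definiteness of $\dot V$; this is precisely why the third inequality in (\ref{eq: conditions}) is imposed, and I would defer the detailed manipulation to the cited source. A secondary subtlety is the invertibility of $\eta(u)$ in the formula for $u_c$: the standing assumption $f_i\neq 0$ after (\ref{eq: system dynamics - actuator simplified}) implies $\det\eta\neq 0$ pointwise, but to rigorously close the exponential-stability argument one has to verify that the rotor thrusts remain bounded away from zero along the closed-loop trajectory, which is a local-in-initial-condition statement.
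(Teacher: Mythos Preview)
Your proposal is correct and follows essentially the same route as the paper: bound $\dot V_1+\dot V_2$ using the nominal input (\ref{eq: nominal control input}) to expose the cross term $e_\mu^\top\kappa$, compute $e_\mu^\top\dot e_\mu$ via $B\dot u-\dot\mu_d$ and substitute (\ref{eq: uc}) with $BB^\dagger=I_6$ so that the $\kappa$ terms cancel and $\dot V\le -k_\mu\|e_\mu\|^2 - z_1^\top W_1 z_1 - z_2^\top W_2 z_2$, then invoke Lemma~\ref{lem:Vbounds} for the quadratic sandwich. The caveats you flag about the saturated integral terms and the standing assumption $f_i\neq 0$ are handled in the paper exactly as you anticipate (by deferring to \cite{goodarzi2013geometric} and by the assumption stated after (\ref{eq: system dynamics - actuator simplified})).
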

\begin{proof}
From the definitions of $V_1$ and $V_2$ in (\ref{eq:V1_def}) and (\ref{eq:V2_def}), we compute the time derivatives of $V_1$ and $V_2$ using the system dynamics (\ref{eq: system dynamics - rigid body simplified}) and the desired control input (\ref{eq: nominal control input}) as follows:
\begin{equation} \label{eq: V1dot and V2dot}
\begin{aligned}
\dot{V}_1 &\le e_{\mu,1}^\top (\frac{c_1}{m}R^\top e_p + \frac{1}{m}R^\top e_v) - z_1^\top W_1z_1 \\
\dot{V}_2 & \le e_{\mu,2}^\top (c_2J^{-1}e_R+e_\omega) - z_2^\top W_2z_2
\end{aligned}
\end{equation}
where the matrices $W_1$ and $W_2$ are given by
\begin{align*}
W_{1} = \tfrac{1}{2} \begin{bmatrix} c_1k_{tp} & -\frac{1}{2}c_1k_{td} \\ -\frac{1}{2} c_1k_{td} & k_{td}-c_1 \end{bmatrix}, W_{2} = \tfrac{1}{2} \begin{bmatrix} \frac{c_2k_{rp}}{\lambda_{max}(J)} & -\frac{c_2k_{rd}}{2\lambda_{min}(J)} \\ -\frac{c_2k_{rd}}{2\lambda_{min}(J)} & k_{rd}-c_2 \end{bmatrix},
\end{align*}
and $W_1, W_2$ are positive definite by (\ref{eq: conditions}) \cite{goodarzi2013geometric_arxiv}.

From the definition of $\kappa$ in (\ref{eq: kappa}), the definition of $e_\mu = \mu - \mu_d = Bu - \mu_d$, the actuator dynamics (\ref{eq: system dynamics - actuator simplified}), (\ref{eq: V1dot and V2dot}), and the actuator control input $u_c$ (\ref{eq: uc}),
\begin{equation}\label{eq:Vdot}
\begin{aligned}
    \dot{V} &\le e_{\mu}^\top (B\dot{u} - \dot{\mu}_d + \kappa) - z_1^\top W_1z_1 - z_2^\top W_2z_2\\
    &= e_{\mu}^\top (B(\zeta+\eta u_c)-\dot{\mu}_d + \kappa) - z_1^\top W_1z_1 - z_2^\top w_2z_2 \\
    &= -k_\mu e_\mu^\top e_\mu - z_1^\top W_1 z_1 - z_2^\top W_2 z_2
\end{aligned}
\end{equation}
where we used the fact that $B B^\dagger = I_6$. At the equilibrium point with $k_{pi} e_{pi} = \Delta_p$ and $k_{ri} e_{ri} = \Delta_R$, since the Lyapunov function $V$ is quadratically bounded by $[e_\mu;z_1;z_2]$ from Lemma \ref{lem:Vbounds} and its time-derivative is quadratically bounded by (\ref{eq:Vdot}), the system is exponentially stable \cite{goodarzi2013geometric_arxiv}.
\end{proof}

\subsection{Robustness analysis with uncertain actuator dynamics}

Real hardware deviates from an ideal first-order actuator model, so uncertainty in actuator dynamics must be taken into account. To substantiate this, we experimentally measured the step responses of the actuators. As shown in Fig. \ref{fig:rotor servo response}, the identified time constants vary with the command setpoint, and a single first-order fit leaves noticeable residuals~--~up to 30\% for the servo and up to 45\% for the rotor. We therefore model the actuator dynamics with bounded uncertainty and analyze the controller accordingly. Crucially, guaranteeing stability under such uncertainty removes the need to identify an exact time constant for every actuator.

\begin{figure}
    \centering
    \includegraphics[width=1.0\linewidth]{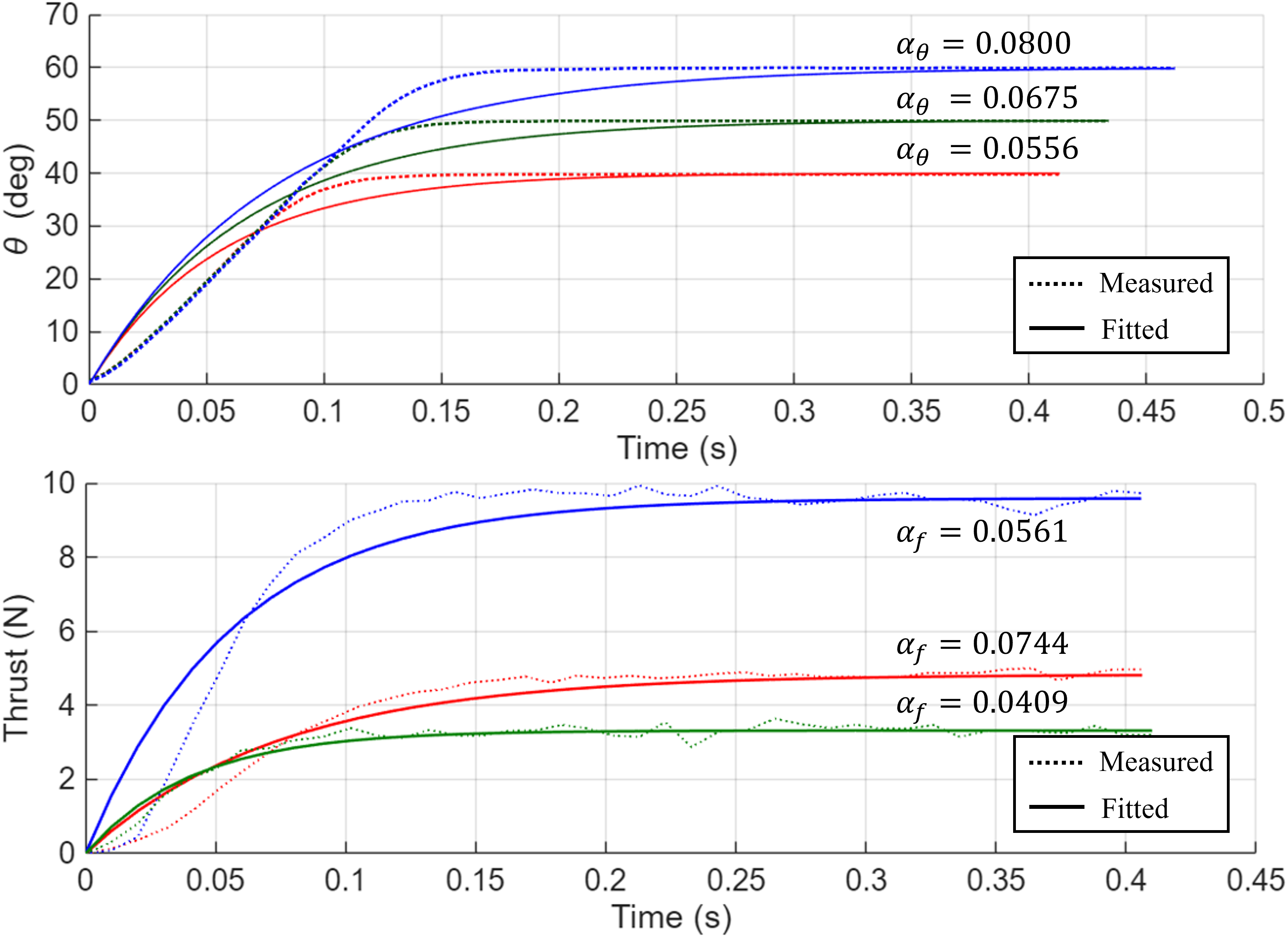}
    \caption{Step responses of the rotor and servo to step commands of varying amplitude. The actuator time constant varies with step size, highlighting the need for robustness to uncertainty in this parameter.}
    \label{fig:rotor servo response}
\end{figure}

In this subsection, we establish the boundedness of the state error under the proposed controller in the presence of uncertainties in both the tiltrotor rigid-body dynamics and the actuator dynamics. To model actuator uncertainty, we treat the rotor and servo time constants to be unknown but bounded deviations around nominal values:
\[
\alpha_f \in [\,\bar{\alpha}_f-\delta_f,\ \bar{\alpha}_f+\delta_f\,],\qquad
\alpha_\theta \in [\,\bar{\alpha}_\theta-\delta_\theta,\ \bar{\alpha}_\theta+\delta_\theta\,].
\]



\begin{lemma}\label{lem:zeta-eta-bound}
For $x = [z_1;z_2;e_\mu]$, assume that $x \in \Omega_c = \{x | V(x) \leq c \}$ for a positive constant $c$. Furthermore, assume that \( \| \dot{v}_d  \| \le a_v\), \( \| \omega_d \| \le a_\omega \), \( \| \dot{\omega}_{d} \| \le a_{\dot \omega} \), and $\alpha_f \in [\,\bar{\alpha}_f-\delta_f,\ \bar{\alpha}_f+\delta_f\,]$, $\alpha_\theta \in [\,\bar{\alpha}_\theta-\delta_\theta,\ \bar{\alpha}_\theta+\delta_\theta\,]$. Then, the norms of \( \Delta \zeta := \zeta - \bar{\zeta} \) and \( \Delta \eta := \eta - \bar{\eta} \) are bounded by a positive constant.
\end{lemma}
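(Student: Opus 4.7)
The plan is to convert the compactness of the sublevel set $\Omega_c$ together with physical actuator saturation into a uniform bound on $u$, and then to exploit the fact that $\Delta\zeta$ and $\Delta\eta$ depend on the actuator uncertainty only through the two scalar differences $\Delta_f := 1/\alpha_f - 1/\bar\alpha_f$ and $\Delta_\theta := 1/\alpha_\theta - 1/\bar\alpha_\theta$, both of which are controlled by $\delta_f, \delta_\theta$ via $\alpha_f \ge \bar\alpha_f - \delta_f > 0$ and $\alpha_\theta \ge \bar\alpha_\theta - \delta_\theta > 0$.

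First I would use Lemma \ref{lem:Vbounds} together with $V(x) \le c$ to extract uniform bounds on $\|e_\mu\|, \|e_p\|, \|e_v\|, \|e_R\|, \|e_\omega\|$ throughout $\Omega_c$. Combined with the reference bounds on $\dot v_d, \omega_d, \dot\omega_d$ (and the implicit boundedness of $v_d, R_d$ together with $\|R\|=1$), the nominal wrench $\mu_d$ in (\ref{eq: nominal control input}) is uniformly bounded, so $\mu = Bu = \mu_d + e_\mu$ is bounded as well. To bound $u$ itself, I would invoke physical actuator saturation $|f_i| \le f_{\max}$: since $u_{2i-1}^2 + u_{2i}^2 = f_i^2$, this gives $|u_{2i-1}|, |u_{2i}| \le f_{\max}$ for each rotor and hence a uniform bound on $\|u\|$.

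Second, I would expand $\Delta\zeta$ and $\Delta\eta$ blockwise. Each block $\Delta\zeta_i$ is a linear combination of $u_{2i-1}, u_{2i}, u_{2i-1}\tan^{-1}(u_{2i}/u_{2i-1}), u_{2i}\tan^{-1}(u_{2i}/u_{2i-1})$ with coefficients $\Delta_f$ and $\Delta_\theta$. Using $|\Delta_f| \le \delta_f / ((\bar\alpha_f - \delta_f)\bar\alpha_f)$ and the analogue for $\Delta_\theta$, together with $|\tan^{-1}(\cdot)| \le \pi/2$ and the $f_{\max}$ bound on $u$, every entry is bounded uniformly. A parallel argument for $\Delta\eta_i$ uses the fact that its $\Delta_f$-column contains only the bounded ratios $u_j / \sqrt{u_{2i-1}^2 + u_{2i}^2} \in [-1, 1]$, while its $\Delta_\theta$-column contains only the entries of $u$. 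Summing across the $n$ rotor blocks then yields uniform positive-constant bounds on both $\|\Delta\zeta\|$ and $\|\Delta\eta\|$.

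The hard part is the first step: the set $\Omega_c$ constrains only the tracking errors and the wrench mismatch $e_\mu$, not the actuator state $u$, and since $B \in \mathbb{R}^{6 \times 2n}$ has a nontrivial null space, the identity $\mu = Bu$ does not by itself pin down $\|u\|$. The cleanest resolution is to treat physical actuator saturation as a standing assumption; this is implicit in any real multirotor and is coherent with the already-stated condition $f_i \neq 0$ used to invert $\eta$ in (\ref{eq: uc}).
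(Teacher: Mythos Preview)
Your two-step outline---bound the actuator state $u$ on the sublevel set, then bound $\Delta\zeta$ and $\Delta\eta$ blockwise using the reciprocal time-constant gaps---matches the paper's proof, and your block-by-block treatment of $\Delta\zeta_i,\Delta\eta_i$ is essentially the same as the paper's (the paper rewrites the blocks in $(f_i,\theta_i)$-coordinates rather than $(u_{2i-1},u_{2i})$-coordinates, but the resulting bounds are equivalent; your use of $|\tan^{-1}(\cdot)|\le\pi/2$ in place of the paper's undefined $\theta_{\max}$ is a harmless coarsening).

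The genuine difference is in how $\|u\|$ gets bounded. The paper first bounds $\mu_d$ from \eqref{eq: nominal control input} and $e_\mu$ from the sublevel set, then passes directly to $\|u\|\le\|B^\dagger\|\,\|\mu_d+e_\mu\|=:\rho(c)$ and reads off $f_{\max}\le\rho(c)$ from $\|u\|^2=\sum_i f_i^2$. That step tacitly identifies $u$ with $B^\dagger Bu$, i.e.\ it assumes the actuator state has no null-space component; this is consistent with the pseudoinverse allocation used throughout but is not argued from the dynamics. You correctly flag the null-space issue and instead introduce a physical saturation bound $|f_i|\le f_{\max}$ as a standing assumption. Your route is logically clean but imports a hypothesis not listed in the lemma, and it makes the preceding work bounding $\mu_d$ from $\Omega_c$ superfluous for the purpose of bounding $u$. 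The paper's route stays within the stated hypotheses at the cost of the row-space assumption you identified. Either way, once an $f_{\max}$ is in hand the remainder of the proof is identical.
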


\begin{proof}
From (\ref{eq: V bound}) and the assumption that $V(x) \leq c$, 
\begin{equation} \label{eq: V-bound-c}
    \tfrac{1}{2} e_\mu ^\top e_\mu + z_1^\top M_{11} z_1 + z_2^\top M_{21} z_2 + V_I
    \;\le\; V
    \;\le\; c  \,\,\,.
\end{equation}
Since all terms on the left-hand side are positive, \( e_\mu, z_1, z_2 \) are bounded as follows:
\begin{equation} \label{eq: z-bound}
\|e_\mu\|\le \sqrt{2c},\,\,\,\,
\|z_1\|\le \sqrt{\frac{c}{\lambda_{\min}(M_{11})}},\,\,\,\,
\|z_2\|\le \sqrt{\frac{c}{\lambda_{\min}(M_{21})}} \,.
\end{equation}
Substituting this condition into (\ref{eq: nominal control input}), we get
\begin{equation} \label{eq: mu_d bound}
    \|\mu_d\| \;\le\; L_1\,\|z_1\| \;+\; L_2\,\|z_2\| \;+\; L_0
\end{equation}
where
\begin{equation} \label{eq: L0 L1 L2}
\begin{aligned} 
   &L_1 =  m\sqrt{k_{tp}^2+k_{td}^2} \\
   &L_2 = \sqrt{k_{rp}^2 + (\lambda _{max}(J)\sqrt{c \over \lambda_{min}(M_{21})}+
   3a_w\lambda_{max}(J)+k_{rd})^2} \\
   &L_0 = m(g+a_v+k_{ti}\sigma_1) + \lambda_{max}(J)(2a_w^2+a_{\dot w}) + k_{ri}\sigma_2 \, .
\end{aligned}
\end{equation}

From the equation \( Bu = \mu_d + e_\mu \), applying (\ref{eq: z-bound}) and (\ref{eq: mu_d bound}), we obtain:
\begin{equation}
\begin{aligned}
\|u\|
&\le \|B^\dagger\|\!\left\{
    \sqrt{2c}
    + L_1\,\sqrt{\frac{c}{\lambda_{\min}(M_{11})}}
    + L_2\,\sqrt{\frac{c}{\lambda_{\min}(M_{21})}}
    + L_0
\right\} \\
&\eqqcolon \rho(c).
\end{aligned}
\end{equation}
Since $\|u\|=\sqrt{\sum_{i=1}^n f_i^{\,2}} \le \rho(c) $, it follows that
\begin{equation} \label{eq:fmax}
f_{max}\,:=\,\max_{i} f_i\;\leq \,\rho(c).
\end{equation}

Using the initial assumption that \( \alpha_f \) and \( \alpha_\theta \) are bounded around their nominal values, we define \( d_f \) and \( d_\theta \) as follows:
\begin{align*}
\left|\frac{1}{\bar{\alpha}_f}-\frac{1}{\alpha_f}\right|
&\le \frac{\delta_f}{\bar{\alpha}_f(\bar{\alpha}_f-\delta_f)}
=: d_f, \\[2mm]
\left|\frac{1}{\bar{\alpha}_\theta}-\frac{1}{\alpha_\theta}\right|
&\le \frac{\delta_\theta}{\bar{\alpha}_\theta(\bar{\alpha}_\theta-\delta_\theta)}
=: d_\theta .
\end{align*}
Using \( d_f \) and \( d_\theta \), \( \Delta \zeta_i \) and \( \Delta \eta_i \) can be expressed as follows:

\begin{equation}\label{eq:delta zeta}
\Delta \zeta_i = f_i 
\begin{bmatrix}
    d_f \cos \theta_i \,-\, d_\theta \theta_i \sin \theta_i \\
    d_f \sin \theta_i \,+\, d_\theta \theta_i \cos \theta_i
\end{bmatrix},
\end{equation}

\begin{equation} \label{eq:delta eta}
\Delta \eta_i = 
\begin{bmatrix}
    d_f \cos \theta_i & -f_i \,d_\theta \sin \theta_i  \\
    d_f \sin \theta_i & f_i \,d_\theta \cos \theta_i 
\end{bmatrix}.
\end{equation}

By substituting \(f_{\max}\) derived in (\ref{eq:fmax}) into (\ref{eq:delta zeta}) and (\ref{eq:delta eta}), we can bound the perturbations as follows:
\begin{equation}\label{eta bound}
\|\Delta\eta\| \le \max\limits_{1 \le i \le n}\|\Delta\eta_i\|_F \le \sqrt{\,d_f^{\,2}+f_{\max}^{\,2}\,d_\theta^{\,2}\,},
\end{equation}
\begin{equation}\label{zeta bound}
\|\Delta\zeta\| = \sqrt{\sum\limits_{i=1}^n \|\Delta\zeta_i\|^2} \le \sqrt{n} \, f_{\max} \sqrt{\,d_f^{\,2} + \theta_{\max}^{\,2} \, d_\theta^{\,2}\,}.
\end{equation}

\end{proof}

\begin{theorem}\label{thm:ISS stability}
Assume that $k_\mu > \tfrac{(1-\gamma)^2}{\gamma}\max\!\{\tfrac{M_1^{2}}{\lambda_{\min}(W_1)},\,\tfrac{M_2^{2}}{\lambda_{\min}(W_2)}\}$ and $f_i>0$ $\forall i$ with $\gamma = 1 - \|B\|\,\|\Delta\eta\,\eta^{-1}\|\,\|B^\dagger\| > 0$. Then, the error variable $x=[z_1;z_2;e_\mu]$ for the closed-loop system composed of (\ref{eq: system dynamics - rigid body simplified}), (\ref{eq: system dynamics - actuator simplified}) and (\ref{eq: uc}) with uncertain parameters in actuator time constant is bounded.

\begin{proof}
From \eqref{eq: uc}, the proposed control input is specified using nominal values of $\alpha_f, \alpha_\theta$ as
\begin{equation} \label{eq: uc nominal}
u_c = \bar\eta^{-1}B^\dagger(\dot{\mu}_d-B\bar\zeta-k_\mu e_\mu-\kappa)
\end{equation}
where $\bar{(\cdot)}$ denotes the nominal counterpart using $\bar{\alpha}_f, \bar{\alpha}_\theta$. Substituting the proposed control law \eqref{eq: uc nominal} into \eqref{eq:Vdot} yields
\begin{align} \label{eq:Vdot uncertainty}
\dot V \le
& -k_\mu\Big(1-\|B\|\,\|\Delta\eta\,\eta^{-1}\|\,\|B^{\dagger}\|\Big)\,\|e_\mu\|^{2}
  + \|e_\mu\|\,\|B\|\,\|\Delta\zeta\| \notag \\
& + \|B\|\,\|\Delta\eta\,\eta^{-1}\|\,\|B^{\dagger}\|\,\|\dot{\mu}_d-B\zeta-\kappa\| \notag \\
& - z_1^{\top}W_1 z_1 - z_2^{\top}W_2 z_2 .
\end{align}
Assume that the initial state \(x\) satisfies \(V(x) \le c\) for some positive constant \(c\). 
By following a procedure similar to that used for (\ref{eq: mu_d bound}), 
it follows that there exist positive constants \(M_0\), \(M_1\), and \(M_2\) such that
\begin{equation}\label{eq: internal-term}
\begin{aligned}
\|\dot{\mu}_d-B\zeta-\kappa\|
&\le M_0 + M_1 \|z_1\| + M_2 \|z_2\|. \\[2pt]
\end{aligned}
\end{equation}

Substituting (\ref{eq: internal-term}) into \eqref{eq:Vdot uncertainty} and applying Young’s inequality, 
we obtain
\begin{equation}\label{eq:Vdot-bound}
\begin{aligned}
\dot V
&\le -\beta_1 \|e_\mu\|^2 - \beta_2 \|z_1\|^2 - \beta_3 \|z_2\|^2 + \beta_4
\end{aligned}
\end{equation}
where
\begin{equation*}
\begin{gathered}
\gamma  := 1- \|B\|\,\|\Delta\eta\,\eta^{-1}\|\,\|B^\dagger\|\\\
\beta_1 = \tfrac{1}{2}k_\mu\gamma, \beta_2 = \lambda_{m}(W_1) - \frac{(1-\gamma)^2\,{M_1}^{2}}{k_\mu\gamma}\\
\beta_3 = \lambda_{m}(W_2) - \frac{(1-\gamma)^2\,M_2^2}{k_\mu\gamma}, \beta_4 = \frac{\|B\|^2\,\|\Delta\zeta\|^2 + (1-\gamma)^2 M_0^2}{k_\mu \gamma}.
\end{gathered}
\end{equation*}
Note that $\beta_2>0$ and $\beta_3>0$ by the assumption that $k_\mu \;>\; \tfrac{(1-\gamma)^2}{\gamma}\,\max\!\left\{ \tfrac{M_1^{2}}{\lambda_{\min}(W_1)} \;,\; \tfrac{M_2^{2}}{\lambda_{\min}(W_2)} \right\}$.

At the equilibrium point, according to Lemma \ref{lem:Vbounds}, $V$ is quadratically lower bounded by $x$. Thus, from (\ref{eq:Vdot-bound}), there exists a positive constant $c_v$ that satisfies $\dot{V} \leq -c_v V + \beta_4$, and this leads to boundedness of $\lVert x \rVert$ by comparison lemma~\cite{khalil2002nonlinear}.
\end{proof}

\end{theorem}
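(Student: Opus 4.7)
The plan is to analyze the Lyapunov function $V$ from Theorem~1 under the actual (uncertain) actuator dynamics while the controller in (\ref{eq: uc nominal}) uses nominal values $\bar{\alpha}_f, \bar{\alpha}_\theta$. The key observation is that substituting $u_c$ built from $\bar\zeta, \bar\eta$ into the true dynamics $B\dot{u} = B\zeta + B\eta u_c$ introduces two mismatch terms: $B\Delta\zeta$ and a term proportional to $B\Delta\eta\,\eta^{-1} B^\dagger(\dot{\mu}_d-B\zeta-k_\mu e_\mu-\kappa)$. The rigid-body portion of $\dot V$ is handled exactly as in Theorem~1, so the $e_\mu^\top \kappa$ cross-coupling cancels and only the two perturbation terms plus the clean $-k_\mu\|e_\mu\|^2$, $-z_1^\top W_1 z_1$, $-z_2^\top W_2 z_2$ remain. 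This is precisely the content of (\ref{eq:Vdot uncertainty}).

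The next step is to pass from (\ref{eq:Vdot uncertainty}) to a bound of the form (\ref{eq:Vdot-bound}). For this, I would restrict attention to the sublevel set $\Omega_c=\{V\le c\}$, which by (\ref{eq: V-bound-c})--(\ref{eq: z-bound}) gives a uniform bound on $\|z_1\|,\|z_2\|,\|e_\mu\|$ and, through Lemma~\ref{lem:zeta-eta-bound}, uniform bounds on $\|\Delta\zeta\|$ and $\|\Delta\eta\,\eta^{-1}\|$. The nontrivial step is to establish the linear bound $\|\dot{\mu}_d-B\zeta-\kappa\|\le M_0+M_1\|z_1\|+M_2\|z_2\|$ asserted in (\ref{eq: internal-term}). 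This requires differentiating $\mu_d$ in (\ref{eq: nominal control input}) along the closed-loop rigid-body dynamics (\ref{eq: system dynamics - rigid body simplified}), substituting the kinematic identities for $\dot R, \dot e_p, \dot e_v, \dot e_R, \dot e_\omega$, and then grouping coefficients of $z_1, z_2$ separately from bounded exogenous quantities $\dot v_d,\omega_d,\dot\omega_d,\ddot v_d,\ddot\omega_d$ and integral-saturation contributions, mirroring the derivation that produced (\ref{eq: L0 L1 L2}). Lemma~\ref{lem:zeta-eta-bound} additionally provides the piece needed to absorb $\|B\zeta\|$ into the constant part.

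With (\ref{eq: internal-term}) in hand, I apply Young's inequality $ab\le \tfrac{\varepsilon}{2}a^2+\tfrac{1}{2\varepsilon}b^2$ to the two cross terms $\|e_\mu\|\,M_i\|z_i\|$ with weight chosen so that the $\|e_\mu\|^2$ coefficients sum to at most $\tfrac{1}{2}k_\mu\gamma$. Collecting yields
\begin{equation*}
\dot V \le -\tfrac{1}{2}k_\mu\gamma\|e_\mu\|^2 -\Big(\lambda_{\min}(W_1)-\tfrac{(1-\gamma)^2 M_1^{2}}{k_\mu\gamma}\Big)\|z_1\|^2 -\Big(\lambda_{\min}(W_2)-\tfrac{(1-\gamma)^2 M_2^{2}}{k_\mu\gamma}\Big)\|z_2\|^2 +\beta_4,
\end{equation*}
and the hypothesis on $k_\mu$ is exactly what keeps the coefficients of $\|z_1\|^2,\|z_2\|^2$ strictly positive, so (\ref{eq:Vdot-bound}) holds with $\beta_1,\beta_2,\beta_3>0$.

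Finally, Lemma~\ref{lem:Vbounds} gives a quadratic upper bound on $V$ in $[e_\mu;z_1;z_2]$ at the equilibrium, so there is a $c_v>0$ with $-\beta_1\|e_\mu\|^2-\beta_2\|z_1\|^2-\beta_3\|z_2\|^2\le -c_v V$ on $\Omega_c$, giving $\dot V\le -c_v V+\beta_4$. The comparison lemma then yields $V(t)\le V(0)e^{-c_v t}+\beta_4/c_v$, so $V$ stays bounded, $\Omega_c$ is positively invariant for $c\ge V(0)+\beta_4/c_v$, and the error state $x=[z_1;z_2;e_\mu]$ is bounded as claimed. I expect the main obstacle to be the derivation of (\ref{eq: internal-term}): $\dot\mu_d$ depends on the closed-loop state through $\dot e_v$ and $\dot e_\omega$, which themselves contain $\mu=\mu_d+e_\mu$, so extracting a clean affine-in-$\|z_i\|$ bound requires careful use of the sublevel-set estimates from Lemma~\ref{lem:zeta-eta-bound} to absorb state-dependent coefficients into the constants $M_0,M_1,M_2$.
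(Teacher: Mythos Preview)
Your proposal is correct and follows essentially the same route as the paper's own proof: you identify the same mismatch decomposition leading to (\ref{eq:Vdot uncertainty}), invoke the sublevel-set bounds from Lemma~\ref{lem:zeta-eta-bound}, establish the affine bound (\ref{eq: internal-term}) by differentiating $\mu_d$ and absorbing state-dependent coefficients, apply Young's inequality to reach (\ref{eq:Vdot-bound}), and conclude via Lemma~\ref{lem:Vbounds} and the comparison lemma. Your anticipation that deriving (\ref{eq: internal-term}) is the main technical step, and that it must be carried out inside $\Omega_c$ so that the coefficients remain bounded, matches the paper's treatment exactly.
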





\section{RESULTS}

\subsection{Experimental setup}

We validate the proposed backstepping controller and evaluate its performance under uncertainty through hardware experiments. The tiltable quadrotor was equipped with Armattan 2450Kv rotors, APC $6\times4\times3.2$ propellers, and Dynamixel XC330-M288 servos. The flight controller utilized Pixhawk6C with PX4 software, and the on-board computer, ASUS NUC13, ran Robot Operating System 2 (ROS2) to directly control the actuators. For the odometry of the tiltrotor, we fused data from the indoor motion capture system, OptiTrack, and the IMU sensor data from the Pixhawk. The position and attitude controllers operated at 100 Hz and 200 Hz, respectively.

To generate the control inputs for the proposed backstepping controller, the total force and torque of the current state must be estimated. Additionally, to calculate the $\zeta$ matrix in (\ref{eq: system dynamics - actuator simplified}), we estimated the thrusts $f_i$ of each rotor and the state $\theta_i$ of the servos. Servomotors are typically equipped with encoders that provide direct angle feedback. The Dynamixel servos used in our system also include embedded encoders, which allow us to readily obtain estimates of the servo angles. In contrast, many ESCs for rotors do not report the rotor’s angular velocity. To address this limitation, we estimated the rotor thrust indirectly. Specifically, we obtained wrench estimates by solving the inverse rigid-body dynamics using linear and angular acceleration estimates provided by PX4, following the approach of \cite{lee2025geometric}.
To mitigate high-frequency noise in the acceleration data from the sensors, we applied a low-pass filter with a cutoff frequency of 20 Hz to the estimated force and torque. 

For all subsequent real-world experiments, we compare the proposed backstepping controller against a baseline framework consisting of a geometric PID motion controller \cite{goodarzi2013geometric_arxiv} and a geometric allocation method \cite{kamel2018voliro} that does not account for actuator dynamics. The control gains were tuned during hovering at zero roll and pitch angles until sufficiently small translational and rotational errors were achieved. Under these conditions, the gains were further adjusted so that the baseline and the proposed method exhibited comparable performance levels.

To evaluate the performance of the proposed algorithm under aggressive maneuvers involving abrupt changes in control inputs, we designed three experimental scenarios. In the first and second experiments, we compared tracking performance during rapid translational and rotational motions, respectively. The third experiment involved recovering position and attitude under sudden external disturbance, which is also illustrated in Fig.~\ref{fig:thumbnail}. This disturbance was induced by suspending a 0.21 kg mass from a 2 kg drone with a string: the mass initially rested on a table, and as the drone moved laterally to the right, the mass applied an impulsive disturbance in the rotational direction.

\subsection{Experiment 1}

To evaluate agile position–tracking performance, we selected a lemniscate (figure-eight) trajectory. After takeoff, the vehicle maintained a constant altitude $z=1.2\,\mathrm{m}$ and tracked $x(t)=0.4\,\sin(\omega t), \ y(t)=0.3\,\sin(2\omega t)$,
while holding a fixed zero attitude. The average speed over one period was varied by adjusting the angular frequency $\omega$ to achieve $0.8$, $1.0$, and $1.2\,\,\mathrm{m/s}$.

\begin{table}
\centering
\caption{Experiment~1 results: Position and orientation error.\\Better performance in bold.} 
\label{tab:table_lemni} 
\setlength{\tabcolsep}{10pt} 
\renewcommand{\arraystretch}{1.5}
\begin{tabular}{|c|l||c|c|c|}
\hline
\multicolumn{2}{|c||}{} & \multicolumn{3}{c|}{Average Velocity} \\ \cline{3-5} 
\multicolumn{2}{|c||}{} & 0.8 m/s & 1.0 m/s & 1.2 m/s \\ \hline \hline
\multirow{2}{*}{Proposed} & pos. [m]   & \textbf{0.050}  & \textbf{0.055}  & \textbf{0.074}  \\ \cline{2-5} 
                          & rot. [rad] & \textbf{0.057}  & \textbf{0.077}  & \textbf{0.132}  \\ \hline \hline
\multirow{2}{*}{Baseline} & pos. [m]   & 0.070  & 0.072  & $\times$      \\ \cline{2-5} 
                          & rot. [rad] & 0.072  & 0.085  & $\times$      \\ \hline
\end{tabular}
\end{table}

\begin{figure}
    \centering
    \includegraphics[width=1.0\linewidth]{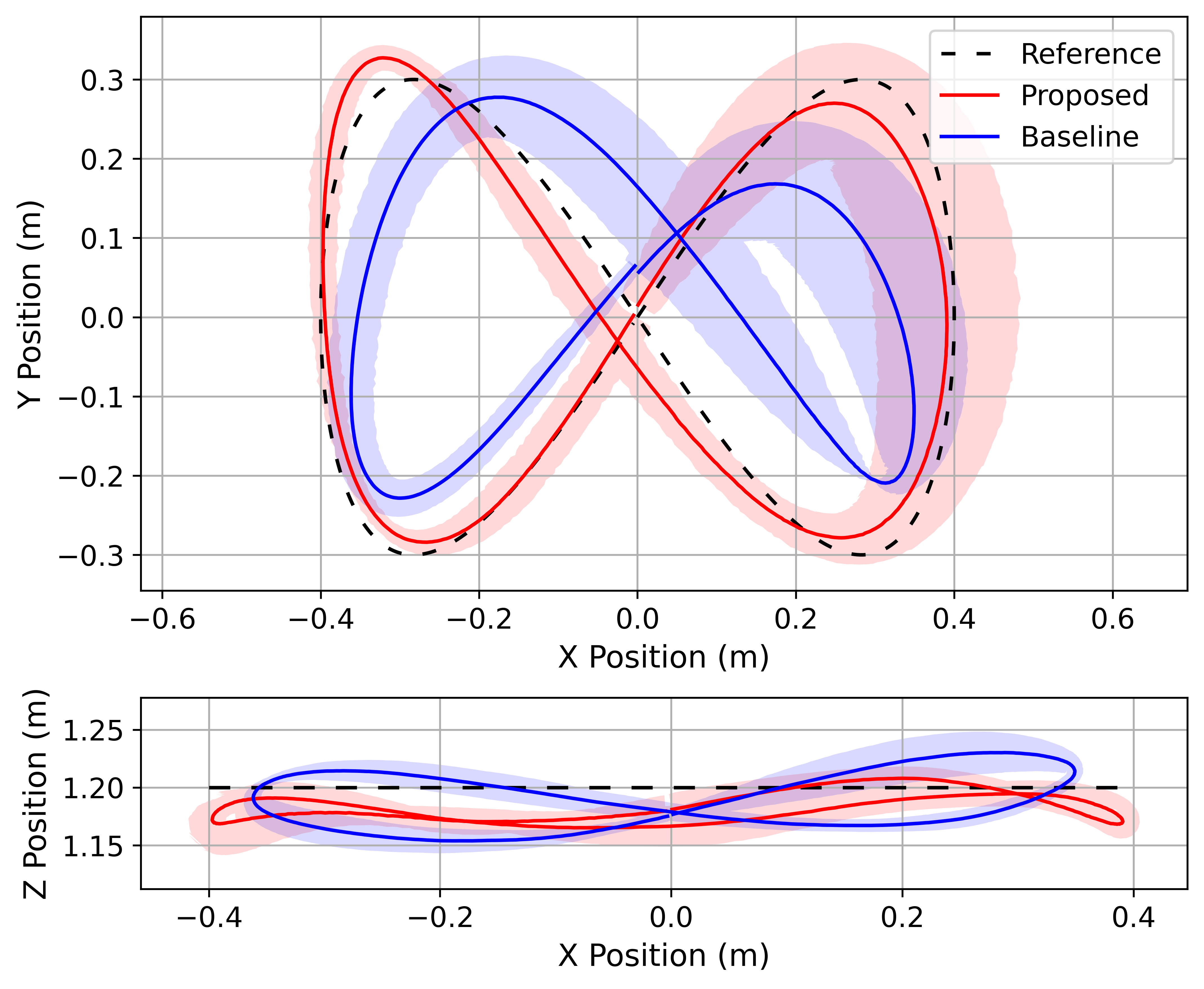}
    \caption{Experiment~1 results. Three trials were conducted along a lemniscate (figure-eight) trajectory at different average speeds. The mean and variance are shown, excluding the failed baseline case at the highest speed where only the proposed method succeeded. Overall, the proposed method tracks the desired trajectory more accurately.}
    \label{fig:exp 1}
\end{figure}

After the vehicle reached the target altitude and stabilized (5~s settling), we recorded data for five full cycles. In all three experiments with different average speeds, the proposed method successfully completed the tasks without losing stability, while the baseline diverged and lost stability during the final and fastest experiment at 1.2~m/s. The RMSE (Root Mean Square Error) of position and orientation errors for each experiment is summarized in Table~\ref{tab:table_lemni}, which shows that the proposed method consistently achieved lower RMSE across all scenarios. In Fig.~\ref{fig:exp 1}, we visualize the results of the first two experiments, excluding the failed baseline case from the third experiment, using the mean values along with $\pm1\sigma$ intervals. The variance is represented as translucent bands, while the mean values are plotted as solid lines.
Additionally, at $200\,\mathrm{Hz}$ we computed the position and rotation errors defined in Eq.~(8) at every time step and report the component-wise RMSE averaged over all components in Table~\ref{tab:table_lemni}.


\subsection{Experiment 2}

We further evaluate agile orientation-tracking performance by fixing the position and comparing the response under rapid roll oscillations. The desired roll angle was given as $\phi_d(t)=50^\circ \sin(2\pi f t)$, and control performance was measured while varying the oscillation frequency $f \in \{0.4, 0.6, 0.8\}\,\mathrm{Hz}$.

In Fig.~\ref{fig:exp 2}, along with the roll angle graphs, we also plot the lateral position component $y$ that shows the largest deviation. The proposed controller yields noticeably smaller $y$-axis position errors than the baseline. As in Experiment~1, the position and rotational errors for each scenario are summarized in Table~\ref{tab:table_roll}. The proposed controller shows clearer superiority at higher frequencies, quantitatively supporting the expectation that it is well suited to agile reference trajectories that demand rapid rotor and servo state changes.

\begin{table}
\centering
\caption{Experiment~2 results: Position and Orientation Error.\\Better performance in bold.} 
\label{tab:table_roll} 
\setlength{\tabcolsep}{10pt} 
\renewcommand{\arraystretch}{1.5}
\begin{tabular}{|c|l||c|c|c|}
\hline
\multicolumn{2}{|c||}{} & \multicolumn{3}{c|}{Oscillation Frequency} \\ \cline{3-5} 
\multicolumn{2}{|c||}{} & 0.4 Hz & 0.6 Hz & 0.8 Hz \\ \hline \hline
\multirow{2}{*}{Proposed} & pos. [m]   & \textbf{0.042}  & \textbf{0.037}  & \textbf{0.050}  \\ \cline{2-5} 
                          & rot. [rad] & \textbf{0.092}  & \textbf{0.098}  & \textbf{0.161}  \\ \hline \hline
\multirow{2}{*}{Baseline} & pos. [m]   & 0.050  & 0.062  & 0.095   \\ \cline{2-5} 
                          & rot. [rad] & 0.113  & 0.133  & 0.284     \\ \hline
\end{tabular}
\end{table}

\begin{figure}
    \centering
    \includegraphics[width=1.0\linewidth]{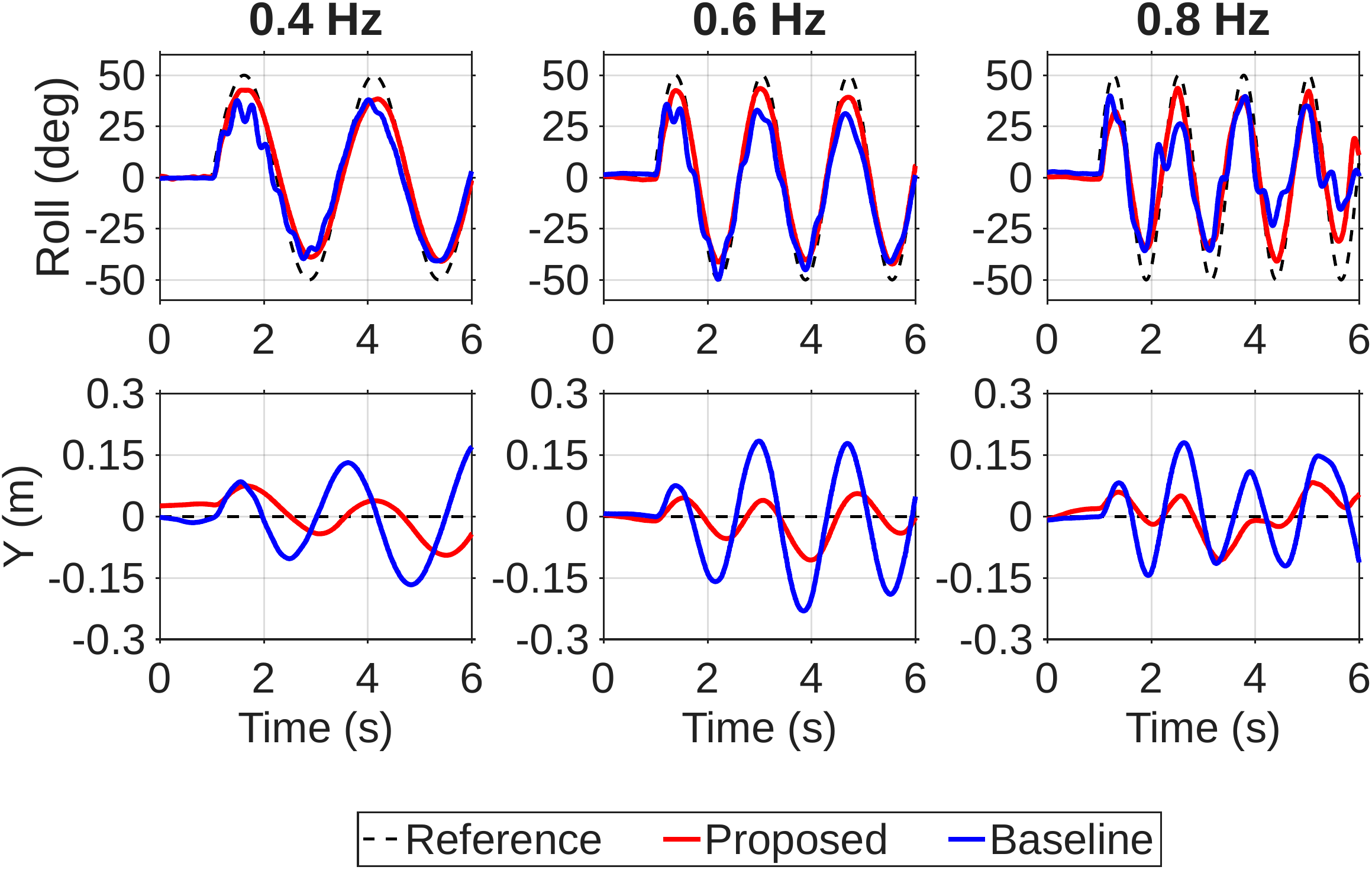}
    \caption{Experiment~2 results. With position held fixed, the roll angle was rapidly varied. The proposed controller (red) achieved more accurate position regulation and orientation tracking than the baseline (blue).}
    \label{fig:exp 2}
\end{figure}

\subsection{Experiment 3}

Finally, to compare the baseline and proposed controllers under sudden changes in the desired inputs, we generated a repeatable, large impulsive external wrench. We connect the drone to a tabletop object by a lightweight tether with initial slack. The object's mass was 0.21 kg, approximately 10\% of the drone's mass. As the drone translated away from the table, the object eventually dropped off the edge and the tether became taut, applying a short, high-magnitude disturbance to the drone (see Fig.~\ref{fig:thumbnail}). We evaluated each controller’s capability to recover position and attitude.

For a fair comparison, all trials were conducted in the same environment with an identical reference trajectory. In each flight, the drone took off and then traveled \(1\,\,\mathrm{m}\) along the \(x\)-axis at a constant speed. The recorded position and attitude are shown in Fig.~5. When the object falls, the roll angle suddenly reaches approximately $-60^\circ$. The proposed controller stabilizes the vehicle and tracks the setpoint again, while the baseline becomes unstable and diverges.

\begin{figure}
    \centering
    \includegraphics[width=1.0\linewidth]{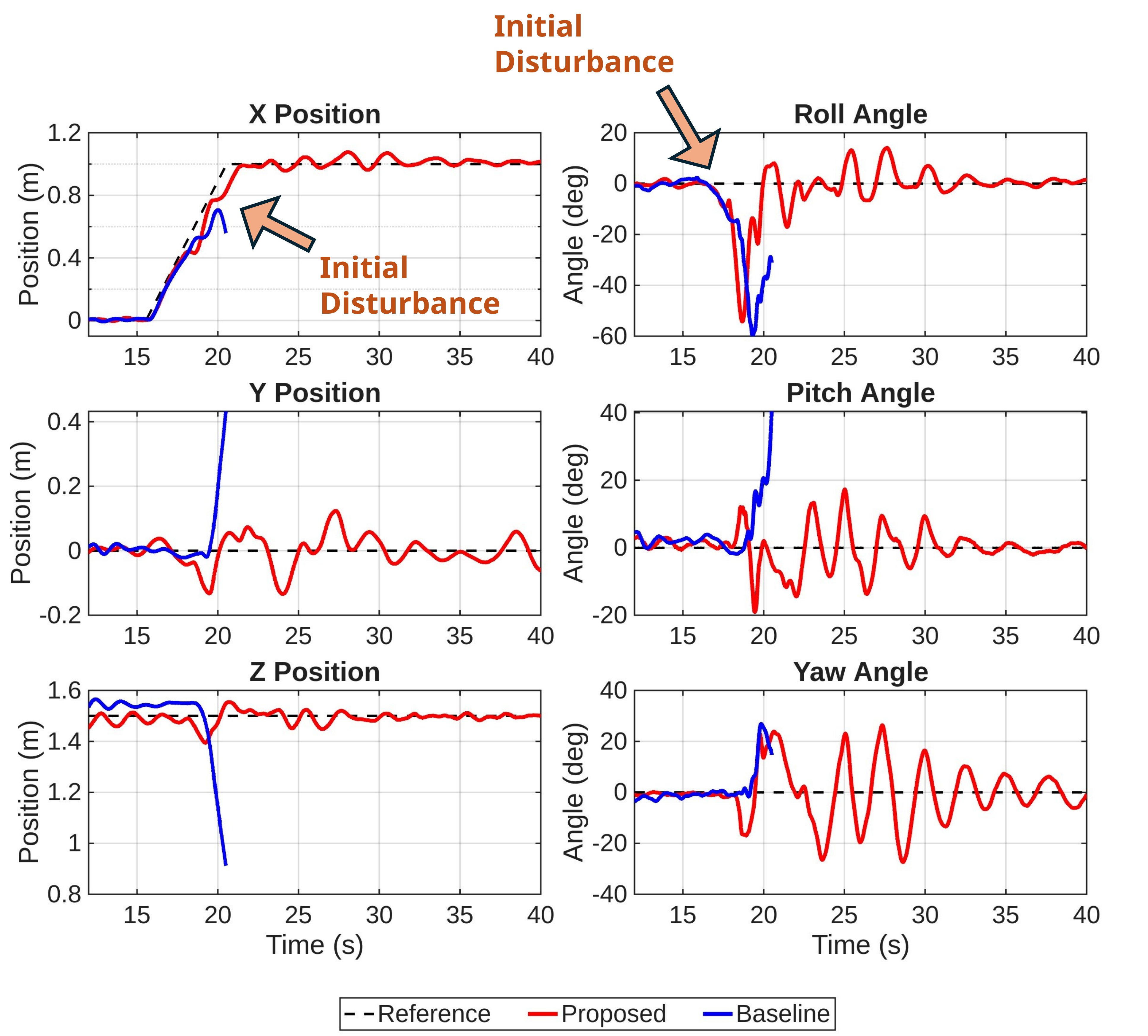}
    \caption{Experiment~3 results. The initial disturbance, indicated by the yellow arrow, was caused when a 0.21 kg mass suspended by a string and initially resting on a table was pulled, applying a rotational disturbance to the multirotor. While the baseline diverged, the proposed controller stabilized the system and demonstrated superior performance.}
    \label{fig:exp 3}
\end{figure}

\section{CONCLUSION}
In this paper, we propose a geometric backstepping controller for an omnidirectional tiltrotor platform that integrates rotor and tilt-servo dynamics. Based on geometric PID control, we designed the controller within a backstepping framework and proved exponential stability under known actuator dynamics. Moreover, we confirmed through direct measurements that no single constant parameter can adequately represent the actuator time constants. To address this, we proved that the proposed controller ensures the boundedness of the closed-loop system even under bounded uncertainties of the actuator time constants. To validate the proposed controller on hardware, we conducted three experimental scenarios and compared it against a baseline geometric-allocation method. The controller demonstrated improved performance under rapid changes in control inputs. In particular, while the baseline often diverged and crashed, our method maintained stability in all cases. In future work, our aim is to strengthen the analysis to guarantee exponential stability under uncertainty and to evaluate the proposed controller across a wider range of tasks.

\normalem
\bibliographystyle{IEEEtran}
\bibliography{bibfile}

\end{document}